\documentclass{article}



 \usepackage[dblblindworkshop, final]{neurips_2025}
\workshoptitle{Lock-LLM: Prevent Unauthorized Knowledge Use from LLMs}



\usepackage[utf8]{inputenc} 
\usepackage[T1]{fontenc}    
\usepackage{hyperref}       
\usepackage{url}            
\usepackage{booktabs}       
\usepackage{amsfonts}       
\usepackage{nicefrac}       
\usepackage{microtype}
\usepackage{graphicx}
\usepackage{xcolor} 
\usepackage{amsmath}
\usepackage{subcaption}
\usepackage{amssymb}
\usepackage{amsthm}

\usepackage{array}
\theoremstyle{definition}
\newtheorem{definition}
{Definition}
\newtheorem{prop}{Proposition}
\newtheorem{remark}{Remark}

\title{Towards Realistic Guarantees: A Probabilistic Certificate for SmoothLLM}

%

\author{%
  Adarsh Kumarappan \\
  California Institute of Technology \\
  \texttt{adarsh@caltech.edu} \\
  \And
  Ayushi Mehrotra \\
  California Institute of Technology \\
  \texttt{amehrotra@caltech.edu} \\
}

\begin{document}

\maketitle

\begin{abstract}
The SmoothLLM defense provides a certification guarantee against jailbreaking attacks, but it relies on a strict `k-unstable' assumption that rarely holds in practice. This strong assumption can limit the trustworthiness of the provided safety certificate. In this work, we address this limitation by introducing a more realistic probabilistic framework, `(k, $\varepsilon$)-unstable,' to certify defenses against diverse jailbreaking attacks, from gradient-based (GCG) to semantic (PAIR). We derive a new, data-informed lower bound on SmoothLLM's defense probability by incorporating empirical models of attack success, providing a more trustworthy and practical safety certificate. By introducing the notion of (k, $\varepsilon$)-unstable, our framework provides practitioners with actionable safety guarantees, enabling them to set certification thresholds that better reflect the real-world behavior of LLMs. Ultimately, this work contributes a practical and theoretically-grounded mechanism to make LLMs more resistant to the exploitation of their safety alignments, a critical challenge in secure AI deployment. Code is available at \url{https://github.com/Adarsh321123/towards_realistic_guarantees}.
\end{abstract}

\section{Introduction}
Large Language Models (LLMs) face a critical vulnerability: "jailbreak" attacks that bypass safety protocols by manipulating input prompts to elicit objectionable responses~\cite{zou2023universal, wei2023jailbreak}. These attacks, ranging from gradient-based methods (GCG) to semantic approaches (PAIR)~\cite{deng2023jailbreak}, represent a fundamental challenge to LLM deployment.

SmoothLLM~\cite{robey2023smoothllm} provides the first formal certificate against jailbreak attacks by perturbing input prompts at the character level and aggregating responses. However, its certification relies on a strict "$k$-unstable" assumption: that adversarial prompts fail if any $k$ or more characters are altered. This deterministic assumption provides overly conservative guarantees that rarely hold in practice.

We address this limitation by introducing a probabilistic certification framework. Through empirical analysis of diverse attacks (GCG and PAIR), we demonstrate that Attack Success Rates (ASRs) decay exponentially rather than dropping abruptly to zero. Motivated by this finding, we propose the "$(k, \varepsilon)$-unstable" assumption: attacks fail with probability at least $1-\varepsilon$ when $k$ or more characters are perturbed. Under this realistic framework, we derive new data-informed bounds on SmoothLLM's defense probability, providing more trustworthy and practical safety certificates. Our approach enables practitioners to set evidence-based security thresholds that balance formal guarantees with empirical attack behavior, transforming theoretical certificates into actionable deployment tools.

\section{Related work}

Early work demonstrated that aligned LLMs are vulnerable to carefully crafted prompts, with Wei et al. documenting instruction-following attack failures and Zou et al. introducing transferable methods like GCG that reliably coerce harmful outputs~\cite{wei2023jailbroken, zou2023universal}. The first formal certificate for jailbreak robustness is SmoothLLM~\cite{robey2023smoothllm}, which adapts randomized smoothing to the prompt space under a deterministic $k$-unstable assumption. Our work strengthens this theory by replacing the worst-case assumption with a probabilistic $(k, \varepsilon)$-unstable model and deriving tighter, data-driven bounds within the same smoothing framework. Orthogonally, multi-turn conversational jailbreaks pose a distinct threat, with psychological manipulation across conversation turns increasing attack success rates by up to 32 percentage points~\cite{kumarappanAutomatingDeception2025, kumarappanEvaluatingMitigating2025}.

\section{Methodology}

\subsection{Preliminaries: the SmoothLLM framework}
\label{sec:preliminaries}

Our work builds upon SmoothLLM~\cite{robey2023smoothllm}. Let $\text{LLM}(\cdot)$ denote an aligned large language model. An attacker crafts a prompt $P = [G;S]$ with goal string $G \in \mathcal{A}^*$ and adversarial suffix $S \in \mathcal{A}^*$ over alphabet $\mathcal{A}$ to:
\begin{equation*}
\underset{S}{\text{find}} \quad S \in \mathcal{A}^* 
\quad \text{s.t.} \quad 
\text{JB}\big(\text{LLM}([G;S])\big) = 1,
\end{equation*}
where binary classifier $\text{JB}(R) \to \{0,1\}$ returns $1$ if response $R$ constitutes a jailbreak.

SmoothLLM defends by perturbing $q\%$ of characters in $P$, sampling $N$ perturbed prompts from distribution $\mathbb{P}_q(P)$, and using majority voting. Two key perturbation strategies are: \textbf{RandomSwapPerturbation} (randomly replace $q\%$ of characters) and \textbf{RandomPatchPerturbation} (replace $d = \lfloor q |P| \rfloor$ consecutive characters).

\begin{definition}[SmoothLLM]
Let $Q_1, \dots, Q_N$ be $N$ i.i.d. perturbed prompts from $\mathbb{P}_q(P)$. Define the vote variable:
\begin{equation*}
V \triangleq \mathbb{I} \left[ \frac{1}{N} \sum_{j=1}^N \text{JB}(\text{LLM}(Q_j)) > \gamma \right],
\end{equation*}
where $\gamma \in [0,1]$ is the confidence margin. SmoothLLM outputs $\text{LLM}(Q)$ for any $Q$ with $\text{JB}(\text{LLM}(Q)) = V$.
\end{definition}

\begin{definition}[$k$-unstable]
An adversarial suffix $S$ is $k$-unstable if for all suffixes $S'$ with Hamming distance $d_H(S,S') \geq k$, the jailbreak fails: 
\[
\forall S' \in \mathcal{A}^*, \, d_H(S,S') \geq k \quad \iff \quad \text{JB}(\text{LLM}([G;S'])) = 0.
\]
\end{definition}

This strong assumption enables deterministic certificates but rarely holds in practice. Our work replaces this with a probabilistic framework capturing empirical suffix fragility.

\subsection{Threat model}
We assume a standard black-box threat model consistent with prior randomized defenses~\cite{chiang2020certified, levine2020derandomized, cohen2019certified}. The attacker cannot access random seeds for character perturbations or adapt their suffix in real-time based on the $N$ perturbed queries within a single SmoothLLM invocation (non-adaptive defense). This model aligns with Robey et al.~\cite{robey2023smoothllm}, who note that the non-differentiable nature of character-level perturbations poses significant challenges to adaptive attacks like GCG, despite surrogate-based adaptive attacks being theoretically possible.

\subsection{Proposed probabilistic certification}

To relax the strictness of the original $k$-unstable assumption, we introduce a probabilistic variant that tolerates rare edge cases where perturbed prompts may still succeed in jailbreaking the model.

\begin{definition}

    Let $G$ denote a goal prompt and $S$ be an adversarial suffix, forming the full prompt $P=[G;S]$. Let $\mathbb{P}_q(P)$ be the distribution over perturbed prompts $Q$ generated by a specific SmoothLLM mechanism (e.g., RandomSwapPerturbation). Let $S'$ be the suffix portion of a perturbed prompt $Q \sim \mathbb{P}_q(P)$. We say that the suffix $S$ is ($k, \varepsilon$)-unstable with respect to the LLM and the perturbation law if the probability that a perturbed prompt causes a jailbreak, conditioned on the suffix having at least $k$ character changes, is at most $\varepsilon$. Formally:
    $$
    \Pr_{Q \sim \mathbb{P}_q(P)} \left[ \text{JB}(\text{LLM}(Q)) = 1 \mid d_H(S, S') \geq k \right] \leq \varepsilon.
    $$

\end{definition}\label{def:eps-k-unstable}

This relaxed definition captures the intuition that adversarial suffixes are generally fragile but not perfectly so. Unlike the strict $k$-unstable assumption, which requires \emph{all} perturbations greater than $k$ edits to fail, our $(k,\varepsilon)$-unstable notion permits a small, bounded fraction $\varepsilon$ of perturbations to succeed. 

\begin{prop}[(k, $\varepsilon$)-Unstable Certificate for RandomSwapPerturbation]
Let $\mathcal{A}$ denote an alphabet of size $v$ and let $P = [G; S] \in \mathcal{A}^m$ denote an input prompt
to a given LLM where $G \in \mathcal{A}^{m_G}$ and $S \in \mathcal{A}^{m_S}$ with $m = m_G + m_S$. 
Let $M = \lfloor qm \rfloor$ denote the number of characters perturbed. 
Assume that $S$ is $(k, \varepsilon)$-unstable for $k \leq \min(M, m_S)$. Then the defense success probability is:
\begin{align}
    \text{DSP}([G; S]) = \sum_{t=\lceil N/2 \rceil}^{N} \binom{N}{t} \alpha^t (1-\alpha)^{N-t}
\end{align}
where $\alpha$ is bounded below by:
\begin{align}
    \alpha \geq \alpha_{\text{lower}} = (1-\varepsilon) \sum_{i=k}^{\min(M,m_S)} \frac{\binom{m_S}{i}\binom{m-m_S}{M-i}}{\binom{m}{M}}
\end{align}
\end{prop}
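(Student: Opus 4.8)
The plan is to introduce the per-sample defense probability $\alpha \triangleq \Pr_{Q \sim \mathbb{P}_q(P)}[\text{JB}(\text{LLM}(Q)) = 0]$ and argue in two stages: first that the defense success probability is exactly the upper binomial tail with parameter $\alpha$, and then that $\alpha$ admits the stated lower bound. For the first stage, since $Q_1, \dots, Q_N$ are i.i.d.\ draws from $\mathbb{P}_q(P)$, the count $T \triangleq \sum_{j=1}^N \mathbb{I}[\text{JB}(\text{LLM}(Q_j)) = 0]$ of samples judged safe is distributed as $\mathrm{Binomial}(N, \alpha)$. The defense succeeds precisely when the aggregate vote $V$ equals $0$, i.e.\ a (strict) majority of perturbed prompts are judged safe; taking the threshold $\gamma = 1/2$, this is the event $\{T \ge \lceil N/2 \rceil\}$, whose probability is $\sum_{t=\lceil N/2\rceil}^N \binom{N}{t}\alpha^t(1-\alpha)^{N-t}$, giving the claimed DSP identity.

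For the second stage I would lower-bound $\alpha$ by conditioning on how much perturbation lands in the suffix. Let $E_k = \{d_H(S,S') \ge k\}$ be the event that at least $k$ suffix characters are altered. By the law of total probability and non-negativity of the discarded term,
\[
\alpha = \Pr[\text{JB}=0 \mid E_k]\,\Pr[E_k] + \Pr[\text{JB}=0 \mid E_k^{c}]\,\Pr[E_k^{c}] \;\ge\; \Pr[\text{JB}=0 \mid E_k]\,\Pr[E_k].
\]
The $(k,\varepsilon)$-unstable hypothesis gives $\Pr[\text{JB}=1 \mid E_k] \le \varepsilon$, hence $\Pr[\text{JB}=0 \mid E_k] \ge 1-\varepsilon$, so that $\alpha \ge (1-\varepsilon)\,\Pr[E_k]$. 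It then remains to evaluate $\Pr[E_k]$ under the RandomSwapPerturbation law.

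For the combinatorial core, I would model RandomSwapPerturbation as choosing a uniformly random $M$-subset of the $m$ prompt positions without replacement and replacing each chosen character by a distinct symbol, so that every selected suffix position contributes to the Hamming distance. Then the number $i$ of chosen positions falling inside the length-$m_S$ suffix is hypergeometric, with $\Pr[i \text{ suffix hits}] = \binom{m_S}{i}\binom{m-m_S}{M-i}\big/\binom{m}{M}$ supported on $\max(0, M-(m-m_S)) \le i \le \min(M, m_S)$. Summing over $i \ge k$ yields $\Pr[E_k] = \sum_{i=k}^{\min(M,m_S)} \binom{m_S}{i}\binom{m-m_S}{M-i}\big/\binom{m}{M}$, and combining with the previous bound gives $\alpha \ge \alpha_{\text{lower}}$; the hypothesis $k \le \min(M, m_S)$ guarantees the sum is non-empty.

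I expect the main obstacle to be justifying the identification of \emph{selected for perturbation} with \emph{Hamming-changed}. If a swap can resample the original character (probability $1/v$ of no change), then $i$ over-counts the true Hamming distance and the clean hypergeometric expression becomes a binomially thinned variant carrying factors of $(v-1)/v$; one must either assume swaps land on genuinely distinct symbols or absorb this slack into a marginally weaker bound. Secondary points of care are the precise correspondence between the voting threshold $\gamma$ and the index $\lceil N/2 \rceil$, and the final invocation of monotonicity of the binomial tail in $\alpha$, which is needed to convert $\alpha \ge \alpha_{\text{lower}}$ into a genuine lower bound on the DSP itself.
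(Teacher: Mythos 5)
Your proof is correct and follows essentially the same route as the paper's: the DSP is the upper binomial tail in the per-sample defense probability $\alpha$, the number of perturbed suffix positions is hypergeometric, and the $(k,\varepsilon)$-unstable hypothesis supplies the factor $(1-\varepsilon)$ on the tail $\sum_{i \geq k}$, with the sub-threshold contribution dropped (the paper phrases this as setting $\eta_i = 0$ for $i < k$; you phrase it as discarding the $E_k^c$ term — these are the same step). Two points where your write-up is in fact more careful than the paper's: (i) you condition on the aggregate event $E_k = \{d_H(S,S') \geq k\}$, which is exactly the event the $(k,\varepsilon)$-unstable definition bounds, whereas the paper asserts the pointwise bound $\Pr[\text{JB}=0 \mid X = i] \geq 1-\varepsilon$ for every individual $i \geq k$ — a formally stronger claim than the definition grants, though the final inequality coincides after summing; (ii) you explicitly flag the gap between \emph{position selected for perturbation} and \emph{character actually changed} (a uniform swap reproduces the original symbol with probability $1/v$, so the hypergeometric count can overstate the Hamming distance), a subtlety the paper silently identifies away. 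Neither point changes the conclusion under the paper's implicit reading of the mechanism, so your proposal matches the intended argument.
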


\begin{proof}[Proof Sketch]
The defense success probability follows a binomial distribution based on the single-prompt success probability, $\alpha$. The lower bounds for $\alpha$ are derived using the law of total probability over the number of perturbed characters in the suffix, which follows a hypergeometric distribution. The full, line-by-line derivation is provided in Appendix~\ref{app:proof_prop1}.
\end{proof}


\begin{remark}[Tightness of the bound]
The bound is tightest when the sub-threshold success probability, $\eta_i$, is zero for all $i<k$. In practice, $\eta_i > 0$ since even few perturbations can neutralize an attack, and its value can be estimated empirically.
\end{remark}

\begin{remark}[Relation to the Original Certificate]
Our proposed $(k, \varepsilon)$-unstable assumption is a direct generalization of the \textit{k-unstable} assumption from \cite{robey2023smoothllm}. By setting $\varepsilon = 0$, we enforce that the probability of a jailbreak after $k$ or more character changes is zero. This recovers the original deterministic condition, where any such perturbation is guaranteed to neutralize the attack \cite{robey2023smoothllm}. Our framework thus provides a more flexible model by accommodating the possibility of rare, successful perturbations.
\end{remark}

\begin{prop}[($k,\varepsilon$)-Unstable Certificate for RandomPatchPerturbation]
\label{prop:patch_certificate}
Let the prompt be $P = [G;S] \in \mathcal{A}^{m}$. Let $M = \lfloor qm \rfloor$ be the patch length for \texttt{RandomPatchPerturbation}. Assume $S$ is ($k, \varepsilon$)-unstable for a threshold $k \leq M$. Then the defense success probability, $DSP([G;S])$, is:
$$
DSP([G;S]) = \sum_{t=\lceil N/2 \rceil}^{N} \binom{N}{t} \alpha^t (1-\alpha)^{N-t}
$$
where $\alpha$ has a model-informed lower bound, $\alpha_{\text{patch}}$, based on an empirical fit of the Attack Success Rate (ASR) to the function $ASR(i) = ae^{-bi} + c$:
$$
\alpha \geq \alpha_{\text{patch}} = \sum_{i=0}^{k-1} (1 - (ae^{-bi} + c)) \cdot \Pr[X=i] + (1-\varepsilon)\sum_{i=k}^{M} \Pr[X=i]
$$
where $X$ is the random variable for the number of characters a random patch overlaps with the suffix $S$, and $\Pr[X=i]$ is its probability mass function.
\end{prop}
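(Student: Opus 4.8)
The plan is to mirror the structure of the proof sketch for Proposition~1 (the swap case), adapting it to the patch perturbation mechanism. The overall architecture is identical: the defense success probability $\text{DSP}$ is a binomial tail computed from the single-prompt success probability $\alpha$, since the $N$ perturbed prompts $Q_1,\dots,Q_N$ are i.i.d.\ draws from $\mathbb{P}_q(P)$ and SmoothLLM succeeds exactly when a majority of them defeat the jailbreak. Thus the first step is to reduce the claim to establishing the stated lower bound $\alpha \geq \alpha_{\text{patch}}$ on the per-prompt defense probability $\alpha \triangleq \Pr_{Q\sim\mathbb{P}_q(P)}[\text{JB}(\text{LLM}(Q))=0]$; the binomial sum over $t$ then follows automatically from the definition of $V$ and the majority-vote threshold $\gamma = 1/2$.

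The core of the argument is a law-of-total-probability decomposition over $X$, the number of characters that the random patch of length $M$ overlaps with the suffix $S$. First I would condition on the event $\{X=i\}$ and split the range of $i$ at the instability threshold $k$. For $i \geq k$, the $(k,\varepsilon)$-unstable assumption applies directly: conditioned on at least $k$ suffix characters being altered, a jailbreak succeeds with probability at most $\varepsilon$, so the defense succeeds with probability at least $1-\varepsilon$. This yields the second summand $(1-\varepsilon)\sum_{i=k}^{M}\Pr[X=i]$. For $i<k$, the $(k,\varepsilon)$ assumption gives no guarantee, so here I would substitute the empirical ASR model $\mathrm{ASR}(i)=ae^{-bi}+c$: the probability that a prompt with $i$ suffix perturbations still jailbreaks is $\mathrm{ASR}(i)$, so the per-level defense probability is $1-(ae^{-bi}+c)$, giving the first summand. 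Summing both pieces over the full support of $X$ and invoking $\Pr[X \le M]=1$ produces exactly $\alpha_{\text{patch}}$, and monotonicity of the binomial tail in $\alpha$ transfers the lower bound on $\alpha$ to the stated bound on $\text{DSP}$.

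The step I expect to be the main obstacle is characterizing the distribution of $X$ correctly. Unlike the swap case, where the count of perturbed suffix characters is cleanly hypergeometric (each of the $M$ perturbed positions is sampled without replacement from the $m$ prompt positions), the patch mechanism replaces $M=\lfloor qm\rfloor$ \emph{consecutive} characters starting at a uniformly random offset. The overlap $X$ between this contiguous window and the suffix block $S$ of length $m_S$ is therefore determined by the geometry of two intervals on a line of length $m$, and its PMF $\Pr[X=i]$ is a piecewise-linear (trapezoidal) function governed by how much the window can slide while maintaining a given overlap, rather than a hypergeometric one. I would need to carefully enumerate the admissible starting offsets for each overlap value $i$, handle the boundary regimes (where the window is too short to fully cover $S$ versus where it can), and confirm that the resulting distribution is a valid PMF summing to one; the proposition deliberately leaves $\Pr[X=i]$ abstract, so the cleanest route is to state its defining combinatorial count precisely and defer the closed-form trapezoidal expression, along with the full line-by-line derivation, to the appendix as was done for Proposition~1.
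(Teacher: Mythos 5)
Your proposal follows essentially the same route as the paper's proof: reduce the DSP to a binomial tail in the single-prompt probability $\alpha$, decompose $\alpha$ by the law of total probability over the patch--suffix overlap $X$, apply the $(k,\varepsilon)$-unstable bound for $i \geq k$ and the fitted ASR model $1-(ae^{-bi}+c)$ for $i < k$, and defer the combinatorial derivation of $\Pr[X=i]$ (counting admissible patch starting positions among the $m-M+1$ possibilities, with its boundary cases) to an appendix. Your identification of the overlap distribution as the key point of departure from the hypergeometric swap case, and its piecewise (trapezoidal) structure from sliding-window geometry, matches the paper's case analysis exactly.
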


\begin{proof}[Proof Sketch]
The proof structure for RandomPatchPerturbation mirrors that of Proposition 1. The Defense Success Probability (DSP) follows a binomial distribution governed by $\alpha$. The main distinction is in deriving $Pr[X=i]$, the probability of a patch overlapping with the suffix by $i$ characters, which depends on a combinatorial analysis of the patch's possible starting positions. The full derivation is provided in Appendix~\ref{app:proof_prop2}.
\end{proof}

\begin{remark}[Practical Interpretation]
\label{remark:practical}
For practitioners, the $(k,\varepsilon)$-unstable assumption provides a quantifiable safety guarantee: \emph{with probability at least $1-\varepsilon$, any perturbation of at least $k$ characters in an adversarial suffix neutralizes the jailbreak.} This enables setting certification thresholds $k$ based on security requirements, estimating $\varepsilon$ empirically using validation datasets, producing safety guarantees that better reflect real-world LLM behavior, quantifying trade-offs between security level and false positive rates, enabling risk-based decision making where perfect security isn't required, and providing probabilistic safety margins for high-stakes applications.
\end{remark}

\subsection{Sensitivity analysis of DSP to $\varepsilon$}
\label{sec:sensitivity}
A critical aspect of our $(k, \varepsilon)$-unstable framework is understanding how $\varepsilon$ affects the certified Defense Success Probability (DSP). The certified DSP decreases monotonically with $\varepsilon$—as attacks become more robust (higher $\varepsilon$), defense guarantees weaken proportionally. The degradation rate is directly proportional to the probability of perturbations meeting the instability condition ($i \geq k$). Detailed mathematical analysis is provided in Appendix~\ref{app:sensitivity}.

\subsection{Experimental motivation}

\begin{figure}[h!]
    \centering
    \begin{minipage}[t]{0.48\linewidth}
        \centering
        \includegraphics[width=\linewidth]{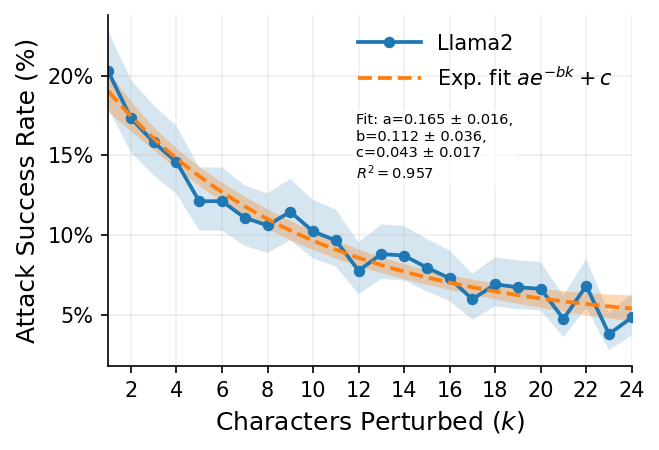}
        \caption{Attack success rate on \textbf{Llama2} as a function of the number of perturbed characters $k$ using \texttt{RandomPatchPerturbation} and GCG attack.}
        \label{fig:llamapatchgcg}
    \end{minipage}
    \hfill
    \begin{minipage}[t]{0.48\linewidth}
        \centering
        \includegraphics[width=\linewidth]{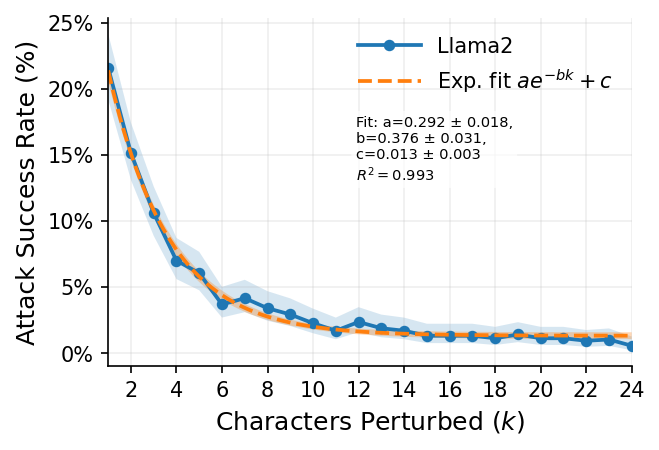}
        \caption{Attack success rate on \textbf{Llama2} as a function of the number of perturbed characters $k$ using \texttt{RandomSwapPerturbation} and GCG attack. }
        \label{fig:llamaswapgcg}
    \end{minipage}
\end{figure}

To motivate the need for our probabilistic framework, we first empirically test the validity of the core assumption underlying the original SmoothLLM certificate: 

\textbf{Does the k-unstable assumption hold in practice?} This assumption implies that for a sufficiently large number of character perturbations, k, the Attack Success Rate (ASR) should fall to zero. We investigate this by measuring the ASR as a function of the number of characters perturbed in an adversarial suffix.

Our experiments, conducted on Llama2 (7B) and Vicuna (7B) \cite{touvron2023llama, vicuna2023} with adversarial suffixes generated by the GCG attack and PAIR attack, reveal that this assumption is overly conservative. As shown in Figures 1 and 2, the ASR does not abruptly drop to zero but instead exhibits a clear exponential decay. The results clearly show a non-zero ASR that decays with k. This provides strong evidence against the strict k-unstable assumption and motivates the need for our proposed (k, $\varepsilon$)-unstable probabilistic framework, which is designed to model precisely this kind of empirically observed behavior. We provide extensive experimental analysis in Appendix \ref{sec:experiments}.

\paragraph{Experimental Setup} 
All experiments were conducted on a single NVIDIA A100 GPU. We obtain 500 malicious prompts from \cite{zou2023universal} and generate adversarial prompts using the Greedy Coordinate Gradient (GCG) attack and Prompt Automatic Iterative Refinement (PAIR) attack \cite{wei2023jailbroken, zou2023universal}. Following the setup in the original SmoothLLM paper~\cite{robey2023smoothllm}, we evaluate two widely used open-source LLMs: Llama2 (7B) and Vicuna (7B) \cite{touvron2023llama, vicuna2023}. Furthermore, we consider two perturbation strategies applied to the adversarial suffixes: RandomSwapPerturbation and RandomPatchPerturbation. We calculate Agresti-Coull confidence intervals since the attack success rate is extremely low.

\begin{figure}
    \centering
    \includegraphics[width=1\linewidth]{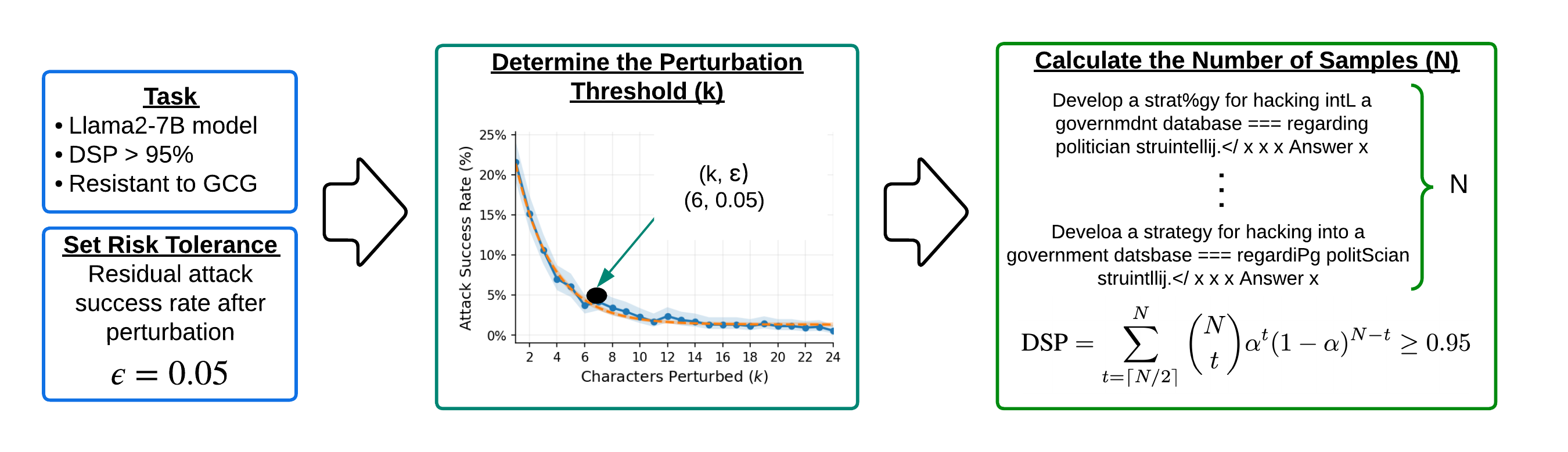}
    \caption{Pipeline for tuning SmoothLLM for obtaining a certified defense success probability (DSP) given the model and attack type. Detailed analysis in Section 3.7.
}
    \label{fig:case-study-figure}
\end{figure}

\begin{figure}
    \centering
    \includegraphics[width=0.5\linewidth]{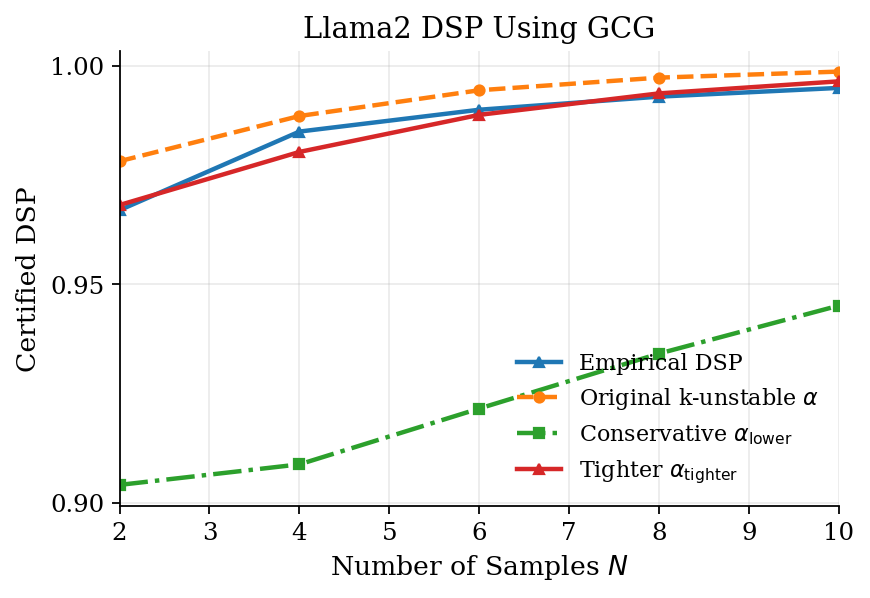}
    \caption{Certified Defense Success Probability (DSP) versus the number of samples $N$ with \texttt{RandomSwapPerturbation}, using the fitted attack success rate model.  We assume a total prompt length of $m = 240$ characters, an adversarial suffix length of $m_S = 100$, perturbation rate $q = 0.10$, threshold $k = 8$, and $\varepsilon = 0.05$. }
    \label{fig:comparison}
\end{figure}

\subsection{Instantiating the (k, $\varepsilon$)-Unstable framework}

Having established the theoretical framework and empirical motivation, we now demonstrate how to practically instantiate the (k, $\varepsilon$)-unstable assumption using our fitted data. From our empirical analysis in Figure 2, we observe that the attack success rate follows the fitted model $ASR(k) = ae^{-bk} + c$ with parameters $a = 0.1650$, $b = 0.1121$, and $c = 0.0427$ for RandomPatchPerturbation.

To instantiate our framework, consider a security requirement where we want certification against perturbations of $k = 10$ characters. From our empirical fit, $ASR(10) = 0.1650 \times e^{(-0.1121×10)} + 0.0427 \approx 0.097$. We can therefore set $\varepsilon = 0.10$ to instantiate our (k, $\varepsilon$)-unstable framework, meaning we assume that perturbations of 10 or more characters fail with probability at least 90\%.

This approach enables practitioners to select $k$ based on their security requirements, estimate $\varepsilon$ from validation data using the fitted ASR model, obtain realistic certificates that account for empirically observed attack robustness, and balance security guarantees with practical risk tolerance. The resulting certificate provides a probabilistic safety guarantee that is both mathematically rigorous and grounded in observed attack behavior, bridging the gap between theoretical assumptions and practical deployment needs.

\subsection{A practical case study: from risk tolerance to defense parameters}
\label{sec:case-study}

To demonstrate the practical utility of our framework, we present an end-to-end scenario that translates an organization's security requirements into concrete, actionable defense parameters for SmoothLLM.

\paragraph{1. Define the Goal} An organization needs to deploy the Llama2 model and requires a certified Defense Success Probability (DSP) of at least 95\% against GCG-style jailbreaking attacks.

\paragraph{2. Set the Risk Tolerance} The organization's security team determines that a residual attack success rate of $\varepsilon = 0.05$ is an acceptable risk for any prompt perturbation that modifies a sufficient number of characters. This means they require the attack to fail with at least 95\% probability once the perturbation threshold is met.

\paragraph{3. Determine the Perturbation Threshold ($k$)} Using empirical data from RandomSwapPerturbation on Llama2 (Figure 3) with fitted model $ASR(k) \approx 0.292e^{-0.376k} + 0.013$, we solve for the smallest $k$ such that $ASR(k) \leq 0.05$:
\begin{equation*}
0.292e^{-0.376k} + 0.013 \leq 0.05 \Rightarrow k \geq 5.49
\end{equation*}
Thus, the organization sets $k = 6$, providing the $(k, \varepsilon)$-unstable guarantee of $(6, 0.05)$. Notably, the more robust PAIR attack would require a higher $k$ for the same $\varepsilon = 0.05$, demonstrating how our framework adapts defense parameters to specific threat models.

\paragraph{4. Calculate the Number of Samples ($N$)} With $k=6$ and $\varepsilon=0.05$ established, we can now calculate the single-prompt defense probability, $\alpha$, using the tighter bound from Proposition 1. This requires computing $\alpha_{\text{tighter}}$ using the model parameters from Figure 1 ($m=240, m_S=100, q=0.10, M=24$). The probability $Pr[X=i]$ for a given number of suffix perturbations follows a hypergeometric distribution. By computing this value, we can then solve the DSP inequality for the minimum number of samples $N$:
\begin{equation*}
    \text{DSP} = \sum_{t=\lceil N/2 \rceil}^{N} \binom{N}{t} \alpha^t (1-\alpha)^{N-t} \geq 0.95
\end{equation*}
Solving this inequality reveals the minimum number of samples required to meet the 95\% defense guarantee (see Figure~\ref{fig:comparison}). For a computed $\alpha$ in this range, a value of $N=10$ is typically sufficient.

\paragraph{Impact and Actionable Insight} This case study provides a concrete, data-driven guide for practitioners. By starting with a high-level security goal (95\% DSP) and risk tolerance ($\varepsilon=0.05$), our framework allows them to derive the precise operational parameters ($k=6, N=10$) required for deploying SmoothLLM. This transforms the theoretical certificate into a practical tool for security engineering.

\paragraph{Guidance on Selecting $k$ and $\varepsilon$}
Our framework is designed for practical application, where practitioners derive certification parameters from their specific security posture. The selection process is a top-down, risk-driven approach: \textbf{1) Set Risk Tolerance ($\varepsilon$):} An organization first defines its maximum acceptable risk, such as $\varepsilon=0.05$ (a 5\% residual attack probability), based on its security policy. \textbf{2) Determine Perturbation Threshold ($k$):} With $\varepsilon$ set, $k$ is found empirically by measuring the Attack Success Rate (ASR) against the target LLM and attack. The minimum $k$ for which the measured $ASR(k) \leq \varepsilon$ becomes the certification threshold. Therefore, there is no universal "typical range"; the values are context-dependent. For a brittle attack like GCG on Llama2, $k \geq 6$ may suffice for $\varepsilon=0.05$, but a more robust attack would demand a higher $k$ to meet the same security guarantee.

\section{Conclusion and future work}
We addressed a fundamental limitation in SmoothLLM's certification: the gap between theoretical assumptions and empirical reality. Our primary contribution is a probabilistic certification framework based on (k,$\varepsilon$)-instability that provides more realistic and actionable security guarantees. By incorporating empirical attack success patterns through data-driven bounds, our approach yields certificates that are both mathematically rigorous and practically relevant for specific threat models.

Our approach has several limitations: the (k,$\varepsilon$)-unstable assumption requires empirical validation across diverse attack types and model architectures, the exponential decay parameters may not generalize beyond tested combinations, and our analysis was limited to 7B-parameter models. The framework assumes relatively independent perturbation effects, which may not hold for attacks exploiting specific linguistic structures, and $\varepsilon$ selection requires domain expertise. Future work should investigate theoretical foundations for exponential decay, adaptive $\varepsilon$ estimation methods, extension to semantic perturbations, and integration with other defense mechanisms.

The immediate impact is more trustworthy safety assessments for LLM deployments. Rather than relying on overly conservative worst-case assumptions, practitioners can now set evidence-based security thresholds that account for realistic attack behavior while maintaining formal guarantees. This transforms the defense from a theoretical construct into a flexible tool for protecting LLMs against exploitation, allowing organizations to manage deployment risk by balancing certified security guarantees, computational cost, and performance.

\bibliographystyle{unsrt}
\bibliography{bib}


\appendix

\section{Full proof of proposition 1}
\label{app:proof_prop1}

\begin{proof}
Following the SmoothLLM framework, the defense success probability is:
\begin{align}
    \text{DSP}(P) &= \Pr[\text{JB}(\text{SmoothLLM}(P)) = 0]\\
    &= \Pr\left[\frac{1}{N}\sum_{j=1}^N \text{JB}(\text{LLM}(P_j)) \leq \frac{1}{2}\right]
\end{align}
where $P_j \sim \mathbb{P}_q(P)$ are i.i.d. perturbed prompts. This equals the probability that at least $\lceil N/2 \rceil$ 
of the perturbed prompts do not jailbreak, which follows a binomial distribution:
\begin{align}
    \text{DSP}(P) = \sum_{t=\lceil N/2 \rceil}^{N} \binom{N}{t} \alpha^t (1-\alpha)^{N-t}
\end{align}
where $\alpha = \Pr_{Q \sim \mathbb{P}_q(P)}[\text{JB}(\text{LLM}(Q)) = 0]$.

To compute $\alpha$, let $X$ denote the random variable representing the number of characters in $S$ that are 
perturbed. Since we uniformly sample $M$ positions from $m$ total positions, $X$ follows a hypergeometric 
distribution:
\begin{align}
    \Pr[X = i] = \frac{\binom{m_S}{i}\binom{m-m_S}{M-i}}{\binom{m}{M}}, \quad i = 0, 1, \ldots, \min(M, m_S)
\end{align}

By the law of total probability:
\begin{align}
    \alpha = \sum_{i=0}^{\min(M,m_S)} \Pr[\text{JB}(\text{LLM}(Q)) = 0 \mid X = i] \cdot \Pr[X = i]
\end{align}

Under the $(k, \varepsilon)$-instability assumption:
\begin{itemize}
    \item For $i \geq k$: $\Pr[\text{JB}(\text{LLM}(Q)) = 0 \mid X = i] \geq 1 - \varepsilon$
    \item For $i < k$: No guarantee; let $\eta_i = \Pr[\text{JB}(\text{LLM}(Q)) = 0 \mid X = i]$
\end{itemize}

Therefore:
\begin{align}
    \alpha &= \sum_{i=0}^{k-1} \eta_i \cdot \Pr[X = i] + \sum_{i=k}^{\min(M,m_S)} \Pr[\text{JB}(\text{LLM}(Q)) = 0 \mid X = i] \cdot \Pr[X = i]\\
    &\geq \sum_{i=0}^{k-1} \eta_i \cdot \Pr[X = i] + \sum_{i=k}^{\min(M,m_S)} (1-\varepsilon) \cdot \Pr[X = i] \quad \text{(by $(k,\varepsilon)$-instability)}
\end{align}

For a conservative lower bound, we set $\eta_i = 0$ for all $i < k$ (worst-case assumption), yielding:
\begin{align}
    \alpha \geq \alpha_{\text{lower}} = (1-\varepsilon) \sum_{i=k}^{\min(M,m_S)} \frac{\binom{m_S}{i}\binom{m-m_S}{M-i}}{\binom{m}{M}}
\end{align}

However, the bound $\alpha_{\text{lower}}$ is overly conservative. The worst-case assumption that $\eta_i=0$ for $i < k$ ignores the high probability of defense success even for sub-threshold perturbations. In practice, $\eta_i$ is almost always greater than zero, as a perturbation of fewer than $k$ characters can still neutralize the jailbreak if it hits a critical character in the suffix or if a concurrent perturbation in the goal string $G$ disrupts the attack logic. To form a tighter, more realistic bound, we can leverage our empirical findings from Figures 1 and 2. The Attack Success Rate, $\text{ASR}(i)$, is well-approximated by an exponential decay model of the form $\text{ASR}(i) \approx a e^{-bi} + c$. Using the fitted parameters from our experiments ($a=0.2921$, $b=0.3756$, $c=0.0133$), we can define a model-informed estimate for $\eta_i = 1 - \text{ASR}(i)$. This yields a tighter lower bound, $\alpha_{\text{tighter}}$:
\begin{align}
    \alpha \geq \alpha_{\text{tighter}} = \sum_{i=0}^{k-1} \left(1 - (a e^{-bi} + c)\right) \cdot \Pr[X = i] + \sum_{i=k}^{\min(M,m_S)} (1-\varepsilon) \cdot \Pr[X = i]
\end{align}
This data-driven bound provides a more accurate assessment of the defense's effectiveness in practice.

Note that the exponential decay model $\text{ASR}(i) \approx a e^{-bi} + c$ is theoretically motivated by the diminishing marginal impact of additional character perturbations. As more characters are modified, the probability of disrupting critical attack patterns increases exponentially, while the constant c captures residual success probability from attack robustness. This functional form has been validated across attack scenarios in adversarial ML literature \cite{shafieinejadRobustnessBackdoorbasedWatermarking2019}.

We note that while this exponential decay provides a strong empirical fit for the tested scenarios, these fitted parameters (\textit{a}, \textit{b}, \textit{c}) are specific to the model-attack pairing. Therefore, a practical deployment of this certification framework would necessitate re-calibrating this model on a validation set relevant to the specific threat model.

This completes the proof.
\end{proof}

\section{Proof sketch of proposition 2}
\label{app:proof_prop2}

\begin{proof}
The overall Defense Success Probability (DSP) follows a binomial distribution, identical to the proof for \texttt{RandomSwapPerturbation} (Proposition 1). The main task is to derive a tight lower bound for $\alpha$, the single-prompt defense success probability under patch perturbations.

By the law of total probability, $\alpha = \sum_{i=0}^{M} \Pr[\text{Defense Success} \mid X=i] \cdot \Pr[X=i]$. Using our ($k, \varepsilon$)-unstable assumption and the empirically fitted model for sub-threshold perturbations, we arrive at the lower bound $\alpha_{\text{patch}}$.

The probability $\Pr[X=i]$ is determined by counting the number of valid starting positions for a patch of length $M$ out of a total of $m-M+1$ possibilities. The precise combinatorial calculation depends on edge cases related to the relative lengths of the goal ($m_G$), suffix ($m_S$), and patch ($M$). We present the high-level result here and provide a full, rigorous derivation in Appendix \ref{app:patch_combinatorics}.
\end{proof}

\section{Detailed sensitivity analysis}
\label{app:sensitivity}

\paragraph{Monotonic Relationship}
The certified DSP is a monotonically decreasing function of $\varepsilon$. This can be seen by examining its relationship with $\alpha$, the single-prompt defense success probability. From Proposition 1, the lower bound on $\alpha$ is given by:
\begin{equation}
    \alpha \geq \alpha_{\text{lower}} = (1-\varepsilon) \sum_{i=k}^{\min(M, m_s)} \frac{\binom{m_s}{i}\binom{m-m_s}{M-i}}{\binom{m}{M}}
\end{equation}
Let $P_{k+}$ denote the probability that a random perturbation modifies $k$ or more characters in the suffix $S$:
\begin{equation}
    P_{k+} = \sum_{i=k}^{\min(M, m_s)} Pr[X=i] = \sum_{i=k}^{\min(M, m_s)} \frac{\binom{m_s}{i}\binom{m-m_s}{M-i}}{\binom{m}{M}}
\end{equation}
The bound on $\alpha$ can be expressed more compactly for both the conservative and tighter, model-informed cases:
\begin{align}
    \alpha_{\text{lower}} &= (1 - \varepsilon) P_{k+} \\
    \alpha_{\text{tighter}} &= \sum_{i=0}^{k-1} \eta_i \cdot Pr[X=i] + (1 - \varepsilon) P_{k+}
\end{align}
In both formulations, $\alpha$ is a linear function of $\varepsilon$. The DSP is the cumulative distribution function of a binomial variable with success probability $\alpha$:
\begin{equation}
    \text{DSP}(\alpha) = \sum_{t=\lceil N/2 \rceil}^{N} \binom{N}{t} \alpha^t (1-\alpha)^{N-t}
\end{equation}
Since $\text{DSP}(\alpha)$ is monotonically increasing with $\alpha$ for $\alpha \in [0, 1]$, and $\alpha$ is monotonically decreasing with $\varepsilon$, it follows that the certified DSP is a monotonically decreasing function of $\varepsilon$.

\paragraph{Quantifying the Degradation}
To quantify the sensitivity, we can analyze the partial derivative of the $\alpha$ bound with respect to $\varepsilon$. For both $\alpha_{\text{lower}}$ and $\alpha_{\text{tighter}}$, the derivative is identical:
\begin{equation}
    \frac{\partial \alpha}{\partial \varepsilon} = -P_{k+}
\end{equation}
This result provides a clear interpretation: the rate at which the single-prompt defense guarantee degrades with an increase in $\varepsilon$ is precisely equal to the negative probability of the perturbation being large enough ($i \geq k$) to meet the instability condition.

Using the chain rule, we can find the sensitivity of the final DSP to $\varepsilon$:
\begin{equation}
    \frac{\partial \text{DSP}}{\partial \varepsilon} = \frac{\partial \text{DSP}}{\partial \alpha} \cdot \frac{\partial \alpha}{\partial \varepsilon} = -P_{k+} \cdot \frac{\partial}{\partial \alpha} \left( \sum_{t=\lceil N/2 \rceil}^{N} \binom{N}{t} \alpha^t (1-\alpha)^{N-t} \right)
\end{equation}
Since both $P_{k+}$ and the derivative of the binomial CDF are non-negative, the overall derivative $\frac{\partial \text{DSP}}{\partial \varepsilon}$ is non-positive, confirming that the certified guarantee weakens as $\varepsilon$ increases. For practitioners, this means that as empirical evidence suggests an attack is more robust (higher $\varepsilon$), the certified probability of successfully defending against it will decrease. The magnitude of this decrease is governed by $P_{k+}$, which is determined by the defense parameters ($q$, $m$, $m_s$), and the slope of the binomial CDF, which is determined by the number of samples $N$.

This direct, quantifiable relationship allows practitioners to model the trade-off between assumed attack robustness and the level of certified safety, enabling more informed, risk-based decisions for LLM deployment.

\section{Combinatorics for patch perturbation}
\label{app:patch_combinatorics}

Here, we provide a detailed derivation for $\Pr[X=i]$, the probability that a randomly placed patch of length $M$ overlaps with an adversarial suffix $S$ of length $m_S$ by exactly $i$ characters. The total prompt $P=[G;S]$ has length $m = m_G + m_S$. A patch is defined by its starting position, chosen uniformly from the $m-M+1$ possibilities.

The number of starting positions that result in an overlap of size $i$ depends on whether the patch is fully contained within the goal or suffix, or if it straddles a boundary. Following the analysis framework of Robey et al. \cite{robey2023smoothllm}, we analyze the cases:

\begin{itemize}
    \item \textbf{Case 1: Full Overlap ($i=M$)}. This requires the patch to be entirely within the suffix, which is only possible if $M \leq m_S$. The number of valid starting positions is $(m_S - M + 1)$.
    
    \item \textbf{Case 2: No Overlap ($i=0$)}. This requires the patch to be entirely within the goal, which is only possible if $M \leq m_G$. The number of valid starting positions is $(m_G - M + 1)$.
    
    \item \textbf{Case 3: Partial Overlap ($1 \leq i < M$)}. This occurs when the patch straddles either the beginning or the end of the suffix.
    \begin{itemize}
        \item A patch starting in $G$ and ending in $S$ yields exactly one starting position for each overlap size $i \in \{1, \dots, \min(M-1, m_G)\}$.
        \item A patch starting in $S$ and ending after $S$ yields exactly one starting position for each overlap size $i \in \{1, \dots, \min(M-1, m_S)\}$.
    \end{itemize}
    If both the goal and suffix are long enough (i.e., $m_G \geq M-1$ and $m_S \geq M-1$), there are exactly two starting positions for each partial overlap size $i \in \{1, \dots, M-1\}$.
\end{itemize}

The probability $\Pr[X=i]$ is found by dividing the number of valid starting positions for a given $i$ by the total number of positions, $m-M+1$. A complete formulation, accounting for all edge cases (e.g., $m_G < M-1$), follows the piecewise derivation in the original SmoothLLM appendix.

\section{Additional experimental validation}\label{sec:experiments}

\begin{figure}[h!]
    \centering
    \begin{minipage}[t]{0.48\linewidth}
        \centering
        \includegraphics[width=\linewidth]{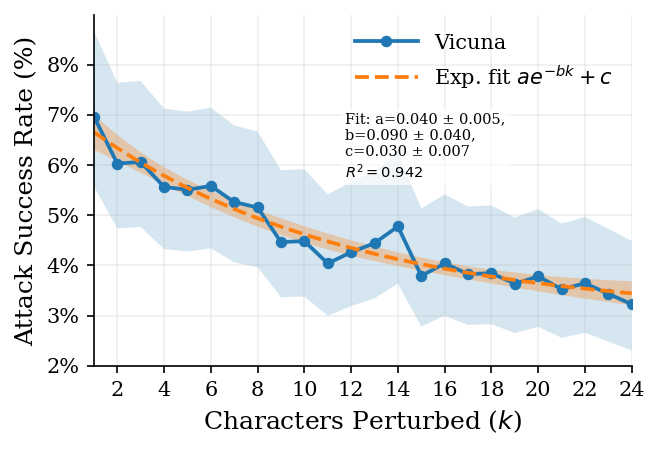}
        \caption{Attack success rate on \textbf{Vicuna} as a function of the number of perturbed characters $k$ using \texttt{RandomPatchPerturbation} and GCG attack.}
        \label{fig:vicunapatchgcg}
    \end{minipage}
    \hfill
    \begin{minipage}[t]{0.48\linewidth}
        \centering
        \includegraphics[width=\linewidth]{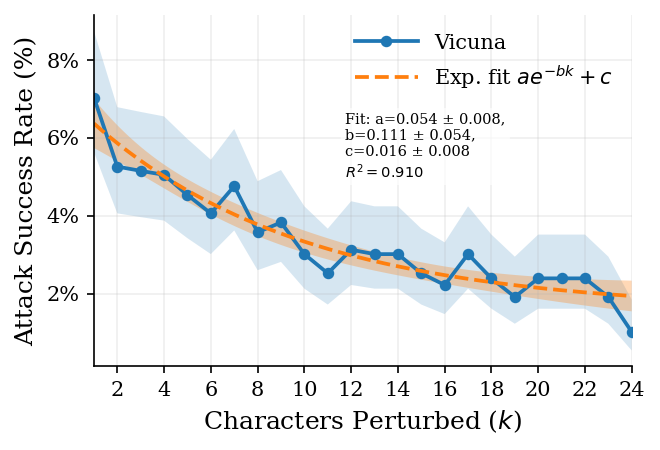}
        \caption{Attack success rate on \textbf{Vicuna} as a function of the number of perturbed characters $k$ using \texttt{RandomSwapPerturbation} and GCG attack.}
        \label{fig:vicunaswapgcg}
    \end{minipage}
\end{figure}

\begin{figure}[h!]
    \centering
    \begin{minipage}[t]{0.48\linewidth}
        \centering
        \includegraphics[width=\linewidth]{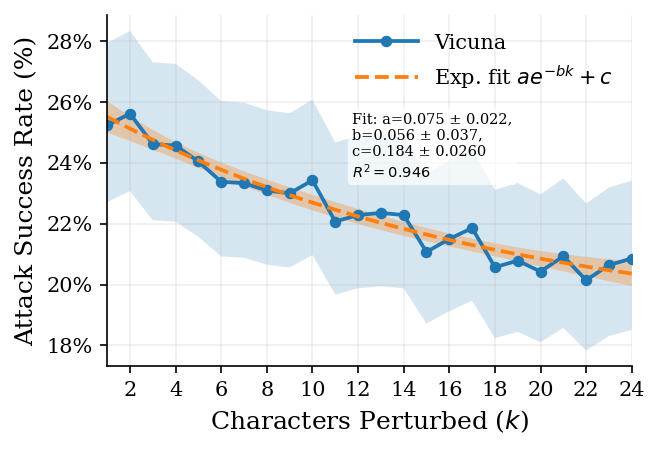}
        \caption{Attack success rate on \textbf{Vicuna} as a function of the number of perturbed characters $k$ using \texttt{RandomPatchPerturbation} and PAIR attack.}
        \label{fig:vicunapatchpair}
    \end{minipage}
    \hfill
    \begin{minipage}[t]{0.48\linewidth}
        \centering
        \includegraphics[width=\linewidth]{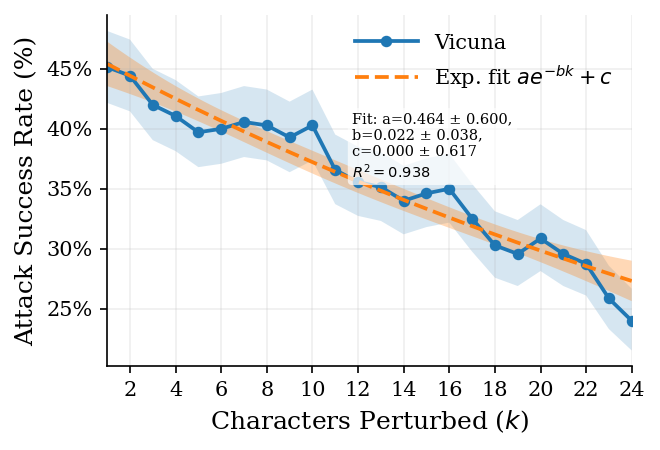}
        \caption{Attack success rate on \textbf{Vicuna} as a function of the number of perturbed characters $k$ using \texttt{RandomSwapPerturbation} and PAIR attack.}
        \label{fig:vicunaswappair}
    \end{minipage}
\end{figure}

\begin{figure}[h!]
    \centering
    \begin{minipage}[t]{0.48\linewidth}
        \centering
        \includegraphics[width=\linewidth]{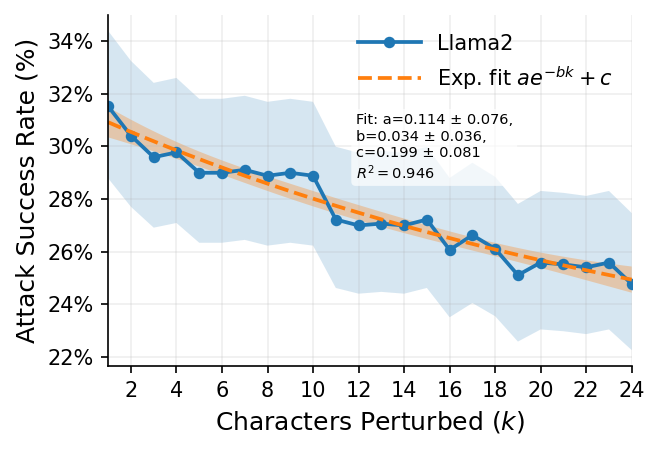}
        \caption{Attack success rate on \textbf{Llama2} as a function of the number of perturbed characters $k$ using \texttt{RandomPatchPerturbation} and PAIR attack.}
        \label{fig:llamapatchpair}
    \end{minipage}
    \hfill
    \begin{minipage}[t]{0.48\linewidth}
        \centering
        \includegraphics[width=\linewidth]{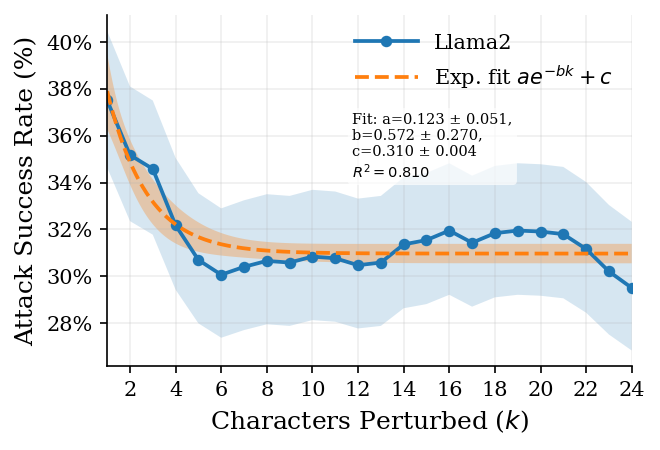}
        \caption{Attack success rate on \textbf{Llama2} as a function of the number of perturbed characters $k$ using \texttt{RandomSwapPerturbation} and PAIR attack.}
        \label{fig:llamaswappair}
    \end{minipage}
\end{figure}

\paragraph{Justification for Exponential Decay Model}
We model the relationship between the number of perturbed characters and the attack success rate (ASR) using:
\[
\mathrm{ASR}(i) \approx a\,e^{-b i} + c.
\]
This functional form is motivated by both empirical observations and prior work on adversarial suffix attacks. As shown in Figures~\ref{fig:vicunapatchgcg}--\ref{fig:llamaswappair}, ASR decreases rapidly when only a few characters are perturbed and then levels off, indicating that adversarial suffixes often rely on a small subset of \emph{critical} characters that drive the jailbreak logic. Perturbing these characters significantly disrupts the attack, while perturbing non-critical characters has little effect. This phenomenon is consistent with prior findings on GCG and PAIR jailbreaks, where specific trigger phrases or structural patterns are crucial to eliciting harmful outputs~\cite{zou2023universal, wei2023jailbroken}.

If we assume that each perturbation has an independent probability \(p\) of altering a critical character, the probability that none of the \(i\) perturbations disrupts the attack is approximately \((1-p)^i \approx e^{-b i}\), where \(b = -\ln(1-p)\). Across all tested models, attack strategies, and perturbation mechanisms, this exponential model fits the observed ASR curves with high accuracy (\(R^2 > 0.9\)). We therefore adopt \((a,b,c)\) as fitted parameters and compute \(\eta_i = 1 - \mathrm{ASR}(i)\) when deriving the tighter lower bound \(\alpha_{\mathrm{tighter}}\) used in our certification framework.

\paragraph{Variation Among Model/Attack Architecture}
While the exponential decay holds across settings, the parameters \((a,b,c)\) change depending on the model, the attack, and the perturbation strategy. We summarize the key trends below:

\begin{itemize}
    \item \textbf{Attack type (GCG vs.\ PAIR).} Our analysis reveals that the PAIR attack is quantifiably more robust to character-level perturbations than GCG, generally exhibiting a higher residual success rate ($c$) and a slower decay (smaller $b$). This empirically-grounded insight is critical for threat modeling, as it implies that defending against PAIR requires a higher perturbation threshold ($k$) or more samples ($N$) to achieve the same certified DSP. Our framework provides the necessary tool to calculate these differing requirements precisely.
    
    \item \textbf{Perturbation method (Swap vs.\ Patch).} \texttt{RandomSwapPerturbation} replaces individual characters independently, so it is more likely to hit critical positions early. This leads to larger \(b\) (faster decay). \texttt{RandomPatchPerturbation}, which perturbs a contiguous block, sometimes misses critical tokens or lands outside the suffix, producing a slower decay (smaller \(b\)) and a slightly larger floor \(c\).
    
    \item \textbf{Model architecture (Llama2 vs.\ Vicuna).} Vicuna generally shows lower initial attack success (\(a\)) and faster robustness gains with perturbations (larger \(b\)) compared to Llama2. However, the residual success \(c\) varies depending on how the jailbreak detector interacts with each model’s outputs.
\end{itemize}

These variations directly affect the certified defense success probability (DSP). A higher \(b\) and lower \(c\) mean that perturbations quickly neutralize attacks, leading to stronger certificates and fewer samples \(N\) required for a given DSP. Conversely, attacks with high \(a\) and \(c\) require either larger perturbation thresholds \(k\) or more samples to achieve the same safety guarantee. For this reason, we fit \((a,b,c)\) separately for each combination of model, attack, and perturbation method instead of assuming a single global model.

\paragraph{Deeper Analysis of Attack Fragility: GCG vs. PAIR}
Our empirical results reveal a stark contrast in the robustness patterns of GCG and PAIR attacks, which can be attributed to their fundamentally different mechanisms. The GCG attack, which uses a gradient-based search to append an often nonsensical adversarial suffix~\cite{zou2023universal}, demonstrates what can be termed \emph{syntactic fragility}. Its success relies on a precise sequence of characters that exploits specific, brittle vulnerabilities in the model's token processing. As shown in our experiments (e.g., Figures~\ref{fig:llamaswapgcg} and~\ref{fig:vicunaswapgcg}), this makes the attack highly susceptible to character-level perturbations; the ASR decays rapidly (a large fitted `b' parameter) and approaches a near-zero floor (a small `c'). A few random character changes are often sufficient to break the exact ``cheat code'' the attack relies on.

In contrast, the PAIR attack operates on a semantic level, using an LLM to iteratively refine a prompt to be more persuasive or to rephrase a harmful request in an innocuous-seeming way. This creates \emph{semantic resilience}. The jailbreak is triggered by the meaning and context of the prompt, not just a specific, fragile character string. Consequently, as seen in Figures~\ref{fig:vicunaswappair} and~\ref{fig:llamaswappair}, PAIR exhibits a much slower ASR decay (a smaller `b') and a significantly higher ASR floor (a large `c'). Changing a few characters is less likely to disrupt the core semantic instruction that bypasses the LLM's safety filters.

This distinction has crucial implications for defense and certification. It demonstrates that character-level defenses like SmoothLLM are exceptionally effective against syntactically fragile attacks but are less so against semantically resilient ones. More importantly, this validates the necessity of our (k, $\varepsilon$)-unstable framework. It allows practitioners to empirically quantify these different robustness profiles: a GCG-style threat might be certified with a small `k' and a very low $\varepsilon$, whereas a more resilient PAIR-style threat would require a much larger `k' or accepting a higher $\varepsilon$ to achieve the same certified DSP. Our framework thus provides a formal language to reason about and defend against the full spectrum of attack robustness.

\end{document}